%
%
%
%

\documentclass{sig-alternate}
\conferenceinfo{KDD 2011}{San Diego, CA, USA}
\usepackage{graphicx,color}
\usepackage{algorithm,algorithmic}
\usepackage{subfigure}
\usepackage{verbatim}
\usepackage{url}

\newtheorem{proposition}{Proposition}

\newtheorem{remark}{Remark}

\newdef{definition}{Definition}

\begin{document}

\title{Clustered Regression with Unknown Clusters}
\numberofauthors{2}
\author{
\alignauthor
Kishor Barman\\
       \affaddr{School of Technology and Computer Science}\\
       \affaddr{Tata Institute of Fundamental Research}\\
       \affaddr{Mumbai, India}\\
       \email{kishor@tifr.res.in}
\alignauthor
Onkar Dabeer\\
       \affaddr{School of Technology and Computer Science}\\
       \affaddr{Tata Institute of Fundamental Research}\\
       \affaddr{Mumbai, India}\\
       \email{onkar@tcs.tifr.res.in}
}


\date{}

\maketitle
\begin{abstract}
  We consider a collection of prediction experiments, which are clustered in the sense that groups of experiments exhibit similar relationship between the predictor and response variables. The experiment clusters as well as the regression relationships are unknown. The regression relationships define the experiment clusters, and in general, the predictor and response variables may not exhibit any clustering. We call this prediction problem clustered regression with unknown clusters (CRUC) and in this paper we focus on linear regression. We study and compare several methods for CRUC, demonstrate their applicability to the Yahoo Learning-to-rank Challenge (YLRC) dataset, and investigate an associated mathematical model. CRUC is at the crossroads of many prior works and we study several prediction algorithms with diverse origins: an adaptation of the expectation-maximization  algorithm, an approach inspired by K-means clustering, the singular value thresholding  approach to matrix rank minimization under quadratic constraints, an adaptation of the Curds and Whey method in multiple regression, and a local regression (LoR) scheme reminiscent of neighborhood methods in collaborative filtering. Based on empirical evaluation on the YLRC dataset as well as simulated data, we identify the LoR method as a good practical choice: it yields best or near-best prediction performance at a reasonable computational load, and it is less sensitive to the choice of the algorithm parameter. We also provide some analysis of the LoR method for an associated mathematical model, which sheds light on optimal parameter choice and prediction performance.

\end{abstract}

\keywords{Regression, Clustering, Local regression, EM algorithm}

\section{Introduction}

Regression, which estimates the relationship between response variables and predictor variables, has a rich history (see for example \cite{Hastie,wasserman}). It is employed in a variety of fields such as machine learning, signal processing, etc. Often the response variables and predictor variables are collected from different experiments, and while the data may be different across experiments, there may be reason to believe that several experiments share the same regression relationship (such as the same regression parameter vector in case of linear regression). For example, in the Yahoo Learning-to-rank Challenge (YLRC) dataset \cite{ylrc}, there are several queries, and for each query, there are multiple URLs with relevance scores. This can be thought of as a regression problem between the relevance scores as the response variables and the feature vectors of the query-URL pair as the predictor variables. 
Since we expect many queries to be related to each other, we can postulate clusters of queries, with common regression parameters within a cluster, and different parameters across clusters. 
In this paper, we consider such a regression problem where the experiment
clusters are {\it unknown}. The term ``clustered regression" has already been used to refer to the case
when the clusters are known \cite{Liang_Zeger_1, Lloyd_1}. Hence, we refer to our problem as clustered regression with unknown clusters (CRUC).
We study several approaches to this problem arising from different perspectives. For our study, we use the YLRC dataset\footnote{While the focus of YLRC is on ranking URLs, our focus is on an associated regression problem. Our primary aim to show that the CRUC framework is meaningful for a real world dataset such as the YLRC dataset and we do not present any performance study of the ranking aspect.} as well as a mathematical model with corresponding analysis and simulations. In the remainder of this section, we briefly describe the algorithms we propose, their relationship to prior literature, and we outline our main results.

In Section \ref{sec:model}, we describe our basic setup in detail. In brief, we consider $M$ experiments with associated predictor and response variables, and we focus on linear regression. For this prediction problem, two methods immediately come to mind: a common linear regression fit across all experiments and an individual linear regression for each experiment. If we expect that several but not all of the experiments share the same regression relationship, then we wish for methods that lie between these two extremes. In Section \ref{sec:approaches}, we propose several such approaches and below we briefly outline them.

\noindent {\bf The EM Algorithm:} We can postulate that there are only $ 1 \leq K_0 \ll M$ distinct regression vectors (that is, we have $K_0$ clusters of experiments). We can treat the cluster index of each experiment as missing data, and then use the EM algorithm \cite{dempster} to iteratively compute an estimate of the cluster indices and the regression vectors. \\
\noindent {\bf K-means (KM) algorithm:} The classical K-means clustering algorithm \cite[Section 9.1]{bishop1} iteratively computes the cluster indices and the cluster centroids. In our context, we replace the centroid computation with least-squares regression estimation.\\
\noindent {\bf Singular value thresholding (SVT):} If we consider the $d \times M$ matrix of regression vectors, then for $K_0 < d$, it has a small rank. Hence another formulation is to minimize the rank of the regression vector matrix, subject to the condition that the corresponding mean-square error is small (a quadratic constraint). Such an optimization problem has been considered in \cite{Candes_cai} and we adapt their SVT algorithm to our problem.  \\
\noindent {\bf Curds and Whey (CW)}: In {\it multiple regression} \cite{breiman1}, the same predictor variables are used to predict multiple response variables. For $M=K_0$ and the same predictor variables across the experiments, our model reduces to that of multiple regression. In other words, our setup is a generalization of multiple regression, and we can modify the CW method from \cite{breiman1} to our case. (Strictly speaking, for complexity reasons, we consider a ``local" variant of this method as described in the neighborhood methods below.)\\
\noindent {\bf Local Regression (LoR):} Neighborhood methods have proved their worth in collaborative filtering as a good scalable approach \cite{Koren3, barman_2, citeulike:4529451}. Motivated by such methods, we consider local regression  for prediction, where the ``local" neighborhood of an experiment is identified from individual estimates of the regression vectors. 

In Section \ref{sec:empirical_results}, we report the prediction performance of the various algorithms on the YLRC dataset as well as simulated data. We compare their implementation complexities, runtimes, mean-square error (MSE) and classification error (CE). In particular, we identify the LoR method as a good choice that yields near-best performance with acceptable runtime. The LoR method is also less sensitive to the choice of algorithm parameters. Motivated by this finding, we 
analyze the LoR method for a mathematical model with $K_0 \leq M$ clusters of experiments and Gaussian prediction errors. Our analysis exploits known results on the asymptotic normality of maximum likelihood signal parameter estimates \cite{Poor1}. We find that the optimal neighborhood size of the LoR method coincides with the true cluster size, and this gives insight into the query cluster sizes in the YLRC dataset.   We also study performance of the LoR method for different noise levels. For the entire range of noise values, the LoR method improves over individual as well as collective regression. At small noise levels, the LoR method finds the correct neighborhood, and as expected, the MSE is smaller than that for individual regression by a factor equal to the cluster size. As the noise level increases, its performance degrades gracefully and approaches that of collective regression.
The conclusion is given in Section \ref{sec:conclusion} and in the Appendices we fill in several details.

\noindent
{\bf Notation:}
Matrices and vectors are written in bold upper and bold lower case
letters respectively. All vectors are column vectors and $[.]^T$ denotes the transpose. By $\mathcal N(.|\mu, \sigma^2)$, we denote the Gaussian density with mean $\mu$ and variance $\sigma^2$, i.e., $\mathcal N(x|\mu, \sigma^2) = \frac{1}{\sqrt{2\pi}\sigma} e^{-\frac{(x-\mu)^2}{2\sigma^2}}$.

\section{Basic Setup}
\label{sec:model}

Consider $M$ experiments and let ${\mathbf x_{mn}} \in \mathbb R^d$, $y_{mn}\in \mathbb R$, $n=1,...,N_m$ be the prediction and response variables respectively for experiment $m$, $1 \leq m \leq M$. In general, we would like to consider regression of the form:
\[
y_{mn} = g_m({\mathbf x_{mn}}) + w_{mn}
\]
where $w_{mn}$ is the prediction error. If we expect different experiments to have similar regression functions $g_m(\cdot)$, then by pooling their data together, we hope to be able to estimate $g_m$ better, and hence obtain improved prediction. If the response and predictor variables across experiments exhibit clustering, then it is conceptually easy to identify the experiment clusters. 
In this paper, our interest is in the case where the regression functions exhibit clustering, even though the response and predictor variables may not show any clustering.\footnote{However, we note that our algorithms work even if the data itself exhibits clustering.} Our goal is to study mechanisms for pooling data from different experiments to improve the estimation of the regression function and we focus on the special case of linear regression: 
\begin{align}
\label{eq:model}
y_{mn} = \mathbf h_m^T \mathbf x_{mn} + w_{mn}.
\end{align}
In Section \ref{sec:approaches}, we suggest several methods to pool data from across experiments to improve estimation of $h_m$ and consequently improve prediction performance. 

Is the above viewpoint useful in any applications? 
Our empirical results in Section \ref{sec:empirical_results} suggests that the YLRC dataset benefits from this viewpoint. For this dataset,
 each experiment corresponds to a web search with a given
{\it query} and trials of an experiment correspond to the different URLs in the search
results. Each predictor variable is a  feature vector of the corresponding
query-URL pair, and the corresponding response variable is a score
indicating relevance of that URL to the query. Relevance scores are in
the range of 0 (irrelevant) to 4 (perfectly relevant). We expect that many queries
are related to each other and hence pooling their data together may improve prediction. The challenge lies in the fact that, apriori, we do not know which queries are similar, and the feature vectors/relevance scores do not  exhibit clustering on their own. 

To gain further insight, we also test our methods on simulated data. For simulating the data as well as the mathematical analysis, the following  model is useful. Suppose that the regression vectors $\mathbf h_m$
are chosen uniformly randomly from a set of  $K_0$ distinct vectors, and suppose
that $\{w_{mn}\}$ are independent Gaussian with mean 0 and variance $\sigma^2$. For this model, we have $K_0$ unknown clusters of experiments. Since we are interested in the extreme case where the data across experiments is diverse and does not exhibit any obvious clustering, we consider independent predictor variables across experiments. In particular, we choose $\{\mathbf x_{mn}\}$ to be i.i.d.\ Gaussian with zero mean and identity covariance matrix. The $K_0$ true regression vectors are generated randomly, uniformly on the unit sphere in $\mathbb R^d$.

\section{Algorithms for CRUC}
\label{sec:approaches}
In this section, we briefly discuss various approaches to pool data from different experiments.

\subsection{Individual Regressions (IR)}
\label{sec:individual}
For every experiment $m$, we fit a different least square regression on the
training data. Let $\mathbf X_m := [\mathbf x_{m1}^T, ..., \mathbf
x_{mN_m}^T]^T$ denote the matrix of predictor variables for the $m$th
experiments, while the corresponding vector of response variables is denoted by  $\mathbf
y_m:=[y_{m1}, ..., y_{mN_m}]^T$.  Then the 
least-square estimate for the regression  vector is \cite{bishop1}\footnote{We assume throughout that $N_m \geq d$ and the data is well-conditioned so that the desired matrix inverses exist.}
\[ \hat{\mathbf h}_m = (\mathbf X_m^T \mathbf X_m)^{-1}\mathbf X_m^T \mathbf y_m.\]

\subsection{Collective Regression (CR)}
Instead of having a different linear model for each experiment, we can fit a
common linear regression vector for all the experiments. Let $\mathbf X:=[\mathbf
X_1^T, ..., \mathbf X_M^T]^T$ and $\mathbf y:=[\mathbf y_1^T, ...,
\mathbf y_M^T]$ denote the set of predictor variables and the response variables for the whole data respectively. Then the least square estimate
is 
\[ \hat{\mathbf h} = (\mathbf X^T \mathbf X)^{-1} \mathbf X^T \mathbf y.\]

\subsection{An EM Algorithm (EM)}

In order to motivate this approach, recall the mathematical model with Gaussian noise introduced towards the end of Section \ref{sec:model}. Suppose we have $K\leq M$ distinct regression vectors and let $p_k$ denote the probability
that the $k$th regression vector is chosen for an experiment. Since we don't know the clusters, the cluster index of experiments is not known. Treating this as the missing data, we get a link to the EM setup in \cite{dempster}. Using this link, in Appendix \ref{app:em} we derive an EM algorithm that iteratively computes estimates of class probabilities $p_k$, noise variance $\sigma^2$, and the true regression vectors. The steps in this algorithm are described in the following.

\begin{center}
\framebox{\parbox{3.2in}{
{\bf Iterations of the EM algorithm:}

- Require: an integer $K, 1\le K\le M$; a guess for the number of clusters.

- Initialize:  $\mathbf h_{k,(0)}, p_{k,(0)}, k=1,2,...,K$ and $\sigma_{(0)}$; initial guess for the $K$ regression vectors, their assignment probabilities and the noise variance respectively.

- For the $t$-th iteration:

{\bf E step}: 
For $m=1,2,...,M$, $k=1, 2, ..., K$, 
\[ \gamma_{mk,(t)}= \frac{p_{k,(t-1)}\prod_n \mathcal N(y_{mn}|\mathbf
  h_{k,(t-1)}^T \mathbf x_{mn}, \sigma_{(t-1)}^2)}{\sum_k p_{k,(t-1)}\prod_n
  \mathcal N(y_{mn}|\mathbf h_{k,(t-1)}^T \mathbf x_{mn},
  \sigma_{(t-1)}^2)}.\]

{\bf M step}: 
For {$k=1, 2, ..., K$}
\[ \mathbf h_{k,(t)} = \mathbf A_{k,(t)}^{-1} \mathbf b_{k,(t)}, \ \ p_{k,(t)} = \frac{\sum_m \gamma_{mk,
    (t)}}{\sum_{m,k}\gamma_{mk, (t)}} ,\]
where $\mathbf A_k$ and $\mathbf b_k$ are as in \eqref{eq:h} in
Appendix \ref{app:em},

and 
\[  \sigma_{(t)}^2 = \frac{\sum_{m,n,k} \gamma_{mk,(t)}
  (y_{mn}-\mathbf h_{k,(t)}^T \mathbf x_{mn})^2}{\sum_{m,n,k}
  \gamma_{mk,(t)}}.\]

}}
\end{center}

\subsection{A K-means Algorithm (KM)}
The mathematical model considered for motivating the EM algorithm above also provides a basis for this algorithm, which is reminiscent of the classical K-means clustering algorithm \cite[Section 9.1]{bishop1}.
We start with an initial list of $K$ regression vectors. In each
iteration of the algorithm, for each experiment, we find the regression vector from the list of $K$ regression vectors that leads to the least MSE. This leads to a clustering of the experiments. We pool together data from all the experiments in a cluster and find the least-squares regression vector. This yields a new list of $K$ regression vectors and the method continues. 


\begin{center}
\framebox{\parbox{3.2in}{
{\bf Iterations of the K-means algorithm:}

- Require: an integer $K, 1\le K\le M$; a guess for the number of clusters.

- Initialize: $\mathbf h_1^{(0)}, \mathbf h_2^{(0)}, ..., \mathbf h_K^{(0)}$; initial guess for the $K$ regression vectors.

- For the $t$-th iteration: 

For $m=1,2,...,M$, 
\[k_m :=\arg \min_k \sum_n (y_{mn}-\mathbf x_{mn}^T
\mathbf h_k^{(t)})^2.\]

For  $k=1,2,..., M$, find least square estimate $\hat h_k^{(t+1)}$ using data from the experiments with $k_m = k$. 
}}
\end{center}


\begin{table*}
  \centering
  \begin{tabular}{|c|c|c|c|c|c|c|c|c|c|}
\hline
& \multicolumn{3}{|c|}{\bf MSE} & \multicolumn{3}{|c|}{\bf Classification
  error} & \multicolumn{3}{|c|}{\bf Runtime (sec)}\\
\hline
{\bf Method} & \texttt{SMALL} & \texttt{MEDIUM} & \texttt{LARGE} &
\texttt{SMALL} & \texttt{MEDIUM} & \texttt{LARGE} & \texttt{SMALL} & \texttt{MEDIUM} & \texttt{LARGE}\\
    \hline
IR &.7312 &.9494 &1.7746 &.0619 & .1097&.1628 & 1.5322&17.581 &145.2082\\
\hline
CR &.7351 &.7865  &.80 &.0536 & .0846&.0967 &1.4213 &16.9712 &134.2348\\
\hline
EM &.6244 &.6924 &-NA- &.0530 &.0840 &-NA- &507.6 &18880 &-NA-\\
\hline
KM &.6227 & .6965  &-NA- &.0522 &.0842 &-NA- &88.84 &421.2 &-NA-\\
\hline
SVT &.5896 & -NA-\ &-NA- &.0510 &-NA- &-NA- &15.8 & -NA-& -NA-\\
\hline
CW &.6762 & .7270& 0.7399 &.0525 &.0841 & .0968 &1.8566 &24.609 & 841.717\\
\hline
 LoR &.6229 & .7012  &.7291 &.0520 &.0836 &.0982 &1.6042 & 22.961&834.597\\
\hline
 \end{tabular}
  \caption{Error and Runtime comparison of various methods on real data}
  \label{table:mse}
\end{table*}

\subsection{A Singular Value Thresholding Algorithm (SVT)}
If we have $K_0 < d$ clusters as in our mathematical model, then the matrix $\mathbf H:=[\mathbf h_1, ...,
\mathbf h_M]$ of the regression vectors has low rank. Motivated by this, consider the following optimization problem. For a
given matrix $\mathbf G$, let $\text{MSE}(\mathbf G)$ denote the resulting MSE: 
\[\text{MSE}(\mathbf G) = \sum_m \sum_n (y_{mn} - \mathbf g_m^T \mathbf x_{mn})^2,\]
where $\mathbf g_m$ is the $m$th column of $\mathbf G$.
The low-rank assumption suggests that for an $\epsilon > 0$, we should solve
\begin{align}
\label{eq:opt}\textrm{minimize } \mathtt{rank}(\mathbf G),
\textrm{such that } \text{MSE}(\mathbf G) \le \epsilon.
\end{align}
This is a rank minimization problem with a quadratic constraint, and we
use the singular value thresholding (SVT) approach of \cite[Section 3.3.2]{Candes_cai}
to solve it. (In Appendix \ref{app:svt}, we provide more details how \eqref{eq:opt} falls under the formulation of \cite{Candes_cai}.)

\subsection{The Curds and Whey Method (CW)}
To utilize correlation among response variables, \cite{breiman1}
introduced a method called Curds and Whey that finds an
``optimal'' linear combination of the individual regression estimates.
We use a ``local'' version of this method here, which we describe next. The method takes an integer parameter $T \leq M$ and starts by forming individual least-squares regression vectors. For any two experiments, we think of the Euclidean distance between their estimates as the distance between the experiments. For each experiment, we then form a list of $T$ closest experiments (including the experiment itself), and this set is denoted by $\mathcal T$. 
Consider now the predictor vector $\mathbf x_{mn}$ and suppose the individual estimates are $\hat y_{mn}^{(t)}:=\hat{\mathbf h}_t^T \mathbf x_{mn}$, $t=1,2,..., T$, where $\hat{\mathbf h}_t$ is the $t$-th individual estimate from the list of closest experiments. To obtain the final estimate, we consider a linear estimate of the form 
\[\tilde{ y}_{mn} = \sum_{t\in \mathcal T} \beta_t \hat{\mathbf h}_t^T \mathbf x_{mn}.\]
The parameters $\beta_t$ are chosen by minimizing the MSE for each experiment $m$ separately.  

We note that the CW method fits a least square regression on the data in $\mathcal T$, restricted to the span of $\{\hat{\mathbf h}_t\}_{t\in \mathcal T}$. Suppose $\mathcal D_T$ denotes the indices of all the trials  restricted to the experiments  in $\mathcal T$. Then the CW estimate for the $m$th regression vector is 
\[\tilde{\mathbf h}_m = \arg \min_{\mathbf h\in \mathrm{span}(\{\hat{\mathbf h}_t\}_{t\in \mathcal T})} \sum_{i\in \mathcal D_T}(y_i -\mathbf h^T \mathbf x_i)^2.\]
Suppose $\mathbf H_{\mathcal T}$ denotes the $d\times T$ matrix whose columns are the individual estimates of the $T$ most similar experiments. Let $\mathbf X_{m,\mathcal T}$ and $\mathbf y_{m,\mathcal T}$ denote the predictor variables and the corresponding response variables, restricted to the experiments in $\mathcal T$. For a predictor variable $\mathbf x_i \in \mathbf X_{m,\mathcal T}$, let $\mathbf z_i := \mathbf x_i^T \mathbf H_{\mathcal T}$, and suppose $\mathbf Z_{m,\mathcal T}$ denotes the matrix whose rows consists of these vectors $\mathbf z_i$'s. Then the least square estimate for the $\boldsymbol \beta$-vector for experiment $m$ is 
\[\hat {\boldsymbol \beta}= (\mathbf Z_{m,\mathcal T}^T \mathbf Z_{m,\mathcal T})^{-1}\mathbf Z_{m,\mathcal T}^T \mathbf y_{m,\mathcal T}, \]
and the corresponding estimate for the regression vector is, 
\[\tilde{\mathbf h}_m = \sum_t \hat \beta_t \hat{\mathbf h}_t.\]

\subsection{Local Regression (LoR)}
\label{subsec:lr}
In this approach, similar to the CW method, for every experiment, we first find the $T$ most similar experiments. 
Let $\mathbf X_{m,\mathcal T}$ and $\mathbf y_{m,\mathcal T}$ denote the predictor vectors and the response scores restricted to these $T$ nearest experiments. Then the final estimate for the $m$th regression vector is 
\[\tilde{\mathbf h}_m = (\mathbf X_{m,\mathcal T}^T \mathbf X_{m,\mathcal T})^{-1} \mathbf X_{m, \mathcal T}^T \mathbf y_{m,\mathcal T}.\]

\begin{remark}
\label{remark:LR_CW} 
Let $\mathbf H_{\mathcal T}$ denote the $d\times T$ matrix of the $T$ individual
estimates closest to an experiment. When $\mathrm{span}(\mathbf H_{\mathcal T}) = \mathbb R^d$, i.e., the
individual estimates span the whole space,  then the
CW method is same as the LoR method. Otherwise, the CW
method finds a set of estimates that are restricted to a subspace
(linear span of the individual estimates).
\end{remark}

\begin{remark}
\label{remark:clustering}
While the EM and KM algorithms perform joint regression and clustering, the other methods do not perform any explicit clustering.
\end{remark}

\section{Performance Comparison}
\label{sec:empirical_results}
In this section, we compare the different methods on various
samples of the YLRC dataset as well as on data simulated as per the
Gaussian model. Along with MSE and CE, we also discuss the
algorithms in terms of their runtime and complexity.

\subsection{Evaluation on YLRC Dataset}
\begin{figure*}
  \centering
  \subfigure[sensitivity]{
  \includegraphics[width=3.0in]{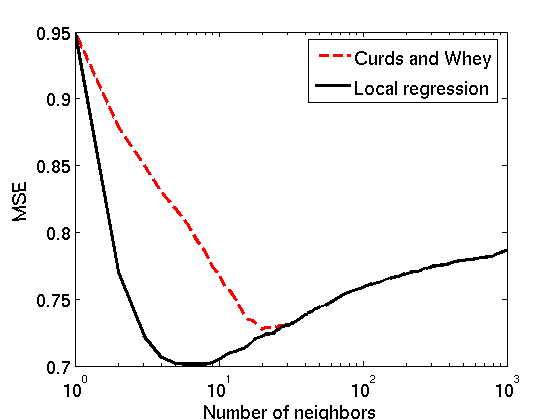}
  \label{subfig:sensitivity_kernel_CW}
}
 \subfigure[running time]{
  \includegraphics[width=3.0in]{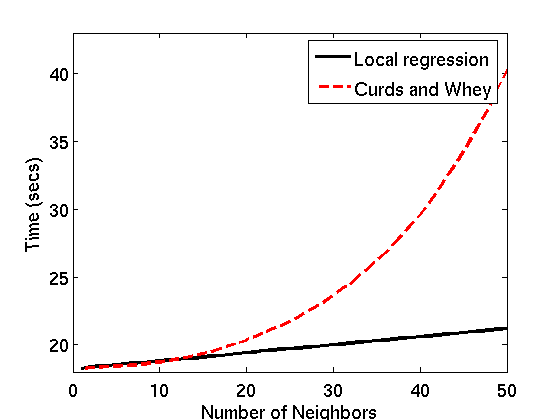}
  \label{subfig:time_kernel_CW}
}
  \caption{Sensitivity and running time comparison of the LoR and CW methods as number of neighbors $T$ varies for the \texttt{MEDIUM} dataset}
\end{figure*}

The feature vectors of the original YLRC dataset are 700 dimensional
and sparse. Motivated by compressed sensing of sparse vectors using random projections \cite{Wang:EECS-2009-169, citeulike:4109397}, for
computational tractability, we project the feature vectors to a
randomly chosen 20 dimensional space, that is, each feature vector is replaced by a 20 dimensional sketch. We work only on a subset of
the queries available in the dataset. The \texttt{LARGE}
dataset consists of 5000 queries, the \texttt{MEDIUM} dataset
consists of 1000 queries, and the \texttt{SMALL} dataset consists of
100 queries.

We split each of these datasets into  training (70\%) and a test dataset
(30\%) randomly. Given the training data and the feature vectors of
the test data, we want to predict the corresponding relevance scores. 
We compare the methods on the basis of their MSE and CE on the test
datasets. To
calculate the CE, we classify the scores as \texttt{HIGH} (3,4) or
\texttt{LOW} (0,1,2), and count the fraction of times when a
\texttt{HIGH} score is estimated as \texttt{LOW} or vice versa. We
also compare all the methods based on the runtimes of their respective
Matlab implementations on an Intel Xeon 4-core 2.67 GHz machine with
16GB of RAM.
Some of the methods considered have an input parameter (e.g., number of
neighbors in the LoR and CW methods; number of clusters in the EM and KM algorithms). For comparison, for each algorithm, we choose the value of the input parameter that yields the least MSE. The MSE and the CE are averaged over 10 realizations of the random splitting into training and test datasets.

\noindent
{\bf Error comparison:} 
Table \ref{table:mse} summarizes the performance of all the algorithms.  For the \texttt{SMALL} dataset, we see
that the SVT algorithm performs the
best, while the LoR, EM and KM methods are not too far behind. But for the
\texttt{MEDIUM} dataset, we could not run the SVT algorithm in our setup
due to large memory requirements. For this dataset CR performs better
than individual regressions, and the EM algorithm shows the best performance, while
the LoR method is quite close. If we observe the running times of the
algorithms, the EM and KM algorithms are very slow compared to the LoR and CW methods. In fact, for
the \texttt{LARGE} dataset,  the EM and KM algorithms take too long and we
could not report their MSE. For this dataset, CR gives more than
100\% improvement over IR, and the LoR method gives further 10\% improvement. The CW
method also shows similar (slightly worse) performance. Hence we see that the LoR
 method has  a great advantage over global algorithms like the EM, KM and
SVT algorithms in terms of the runtime, and also yields competitive performance
in terms of MSE as well as CE.

\noindent
{\bf Predictor variables are not clustered:}
The CRUC framework and the associated algorithms are most relevant when the data does not exhibit clustering but the regression vectors are clustered. (However, we note that all the above algorithms work even when the data is clustered.) 
The YLRC dataset appears to be an example of such a situation. To demonstrate this, we consider an arbitrary query, and let  $\mathbf A$ denote the matrix of feature vectors restricted to the 10 most similar queries. The second largest eigenvalue of the row-normalized correlation matrix of $\mathbf A$ is never more that 0.23 \footnote{From the spectral clustering literature \cite{Dhillon_1}, we know that number of eigenvalues close to unity is an estimate of the number of clusters.}, which suggests that the predictor variables are not clustered. But the regression vectors across queries show clustered behavior and hence the various methods proposed show substantial improvement over IR. Moreover, these methods also outperform CR, which indicates that there are several query clusters.

\noindent
{\bf Impact of algorithm parameters:}
The LoR and CW methods take as input a parameter $T$ for the neighborhood size, while the KM and  EM algorithms need a parameter $K$ representing the cluster size. We next study the sensitivity of the MSE to the choice of these parameters. 

In Figure \ref{subfig:sensitivity_kernel_CW}, we compare the MSE of the LoR and CW methods as we vary $T$. We see that when $T$ is
large, performance of the CW method matches with the LoR method, which is consistent with
Remark \ref{remark:LR_CW} in Section \ref{subsec:lr}.
From the experiments on the simulated data in Section \ref{sec:simulated}, we shall see that in a high SNR regime, the optimal neighborhood size is very close to the true cluster size. Figure \ref{subfig:sensitivity_kernel_CW} thus suggests that the regression vectors are clustered, and the effective cluster size for the \texttt{MEDIUM} dataset is close to 9.
Figure  \ref{subfig:time_kernel_CW} shows that the running time of the LoR method increases linearly with $T$, whereas 
the CW method shows a super-linear growth. (In Section \ref{sect:complexity}, we show that the runtime of the CW method is $O(T^3)$.) Thus while both the LoR and CW methods yield near best MSE and CE, the runtime of the LoR method is smaller.

Figure \ref{subfig:sensitivity_EM_kM} shows that the performance of the KM algorithm is less sensitive to its parameters compared to the EM algorithm. However, both these methods are more sensitive to under-estimation of the optimal parameter compared with the LoR and CW methods. From Figure  \ref{subfig:sensitivity_EM_kM}, we see that runtime the KM algorithm scales better than the EM algorithm, but both are worse compared with the LoR and CW methods.

\begin{figure*}
\centering
  \subfigure[sensitivity]{
  \includegraphics[width=3.0in]{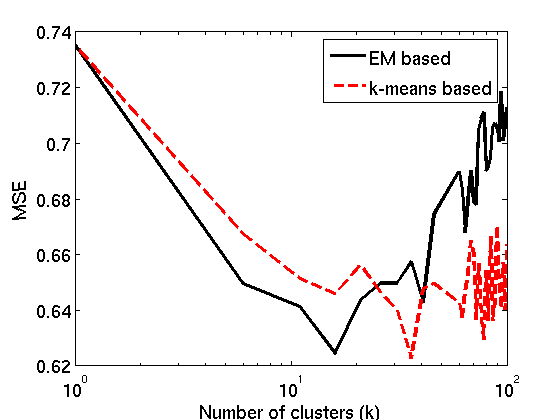}
  \label{subfig:sensitivity_EM_kM}
}
 \subfigure[running time]{
  \includegraphics[width=3.0in]{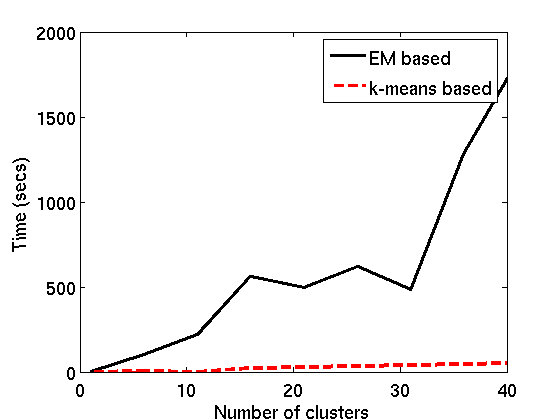}
  \label{subfig:time_EM_kM}
}
  \caption{Sensitivity and running time comparison of the EM and KM algorithms as the cluster size $K$ varies for  the \texttt{SMALL} dataset}
  \label{fig:sensitivity}
\end{figure*}

To understand the above runtimes, we further analyze the
computational complexity of all the different methods in the next section. 
\subsection{Complexity Comparison}
\label{sect:complexity}
To avoid cluttered expressions, in this section we assume there are
same number of trials in each experiment, and we denote this number by $N$.
We also assume that the dimension parameter $d$ is of constant order.

\noindent 
{\bf IR:}
To compute the $m$th estimate, we need
$O(N)$ operations to 
multiply a $d\times  N$ matrix $\mathbf X_m^T$ with its transpose;
constant number of operations to invert a $d\times d$ matrix $\mathbf
X_m^T\mathbf X_m$; $O(N)$ operations to multiply this $d\times d$
inverse with another $d\times N$ matrix $\mathbf X_m^T$; and $O(N)$
operations to multiple this resulting $d\times N$ matrix by a vector
$\mathbf y_m$. Thus we need $O(N)$ operations for each of the
estimates, requiring $O(MN)$ operations in total.

\noindent
{\bf CR:} 
By a similar analysis as above, we see that  we need $O(MN)$
operations to do a complete regression.

\noindent
{\bf EM:}
In the E-step of each iteration, we need $O(MNK^2)$ operations.
In the M-step, we need $O(MNK)$ operations to construct the matrix
$\mathbf A$ and the vector $b$, and, a constant number of operations to compute $\mathbf
A^{-1}\mathbf b$. To compute the variance, we need $O(MNK)$
operations. Thus for the EM algorithm, we require $O(MNK^2)$
operations per iterations.

\noindent
{\bf KM:}
 At each iteration, starting with a list of $K$ regression vectors, for each experiment, we need $O(NK)$
operations to obtain the regression vector that best explains the
data. Thus we require $O(MNK)$ operations for clustering the experiments. After
grouping the experiments based on the closest regression vector,
suppose there are $m_k$ experiments corresponding to the $k$th
vector. To do the corresponding regression, we need $O(m_k N)$ operations, resulting in a total of $O(MN)$
operations for all the $K$ regressions. Thus we need $O(MNK)$
operations for each iteration of the KM algorithm.

\noindent
{\bf SVT:}
In every iteration of this method, we need to multiply a $Md\times MN$
matrix by a $MN\times 1$ vector, requiring $O(M^2N)$ operations, and
then to perform the singular value thresholding on a $M\times d$ matrix, we need $O(M)$
operations. Thus in each iteration of this algorithm, we need
$O(M^2N)$ operations.

\noindent
{\bf CW:}
After we have performed all the individual regressions for each
experiment, we need to compute the $T$ most similar experiments. To do this we
need to compute similarities with all the experiments, requiring $O(M)$
operations; and to find the $T$ most similar experiments require $O(TM)$
operations.
Then to compute the
matrix $\mathbf Z_{m,\mathcal T}$ we need $O(NT^2)$ operations; and
to compute the regression using this matrix requires $O(T^3N)$
operations. Thus to compute the estimates for all the experiments, we need
$O(M^2T+ MNT^3)$ operations, including
the operations needed to compute the individual estimates.

\noindent
{\bf LoR:}
As in the CW method, we need $O(MT)$ operations  to find the $T$ most similar experiments. Then to perform a regression on the data of these $T$
experiments require $O(TN)$ operations, requiring a total of $O(MNT+ M^2T)$ operations for all the $M$ estimates.

\begin{table}
\centering
\label{table:complexity}
\begin{tabular}{|c|c|}
 \hline
{\bf Method} & {\bf Complexity}\\
\hline
IR & $O(MN)$\\
\hline
CR & $O(MN)$\\
\hline
EM (per iteration) & $O(MNK^2)$\\
\hline
KM (per iteration) & $O(MNK)$  \\
\hline
SVT (per iteration) & $O(M^2N)$\\
\hline
CW & $O(M^2T + MNT^3)$ \\
\hline
LoR &   $O(M^2T + MNT)$\\
\hline
\end{tabular}
\caption{Complexity comparison of methods}
\end{table}

We summarize the complexity of these algorithms in the
Table \ref{table:complexity}. We see that the LoR method has a linear growth with
$T$, unlike the CW method that has a cubic growth. This is consistent with the
empirical evaluation as seen in Figure
\ref{subfig:time_kernel_CW}. Similarly, the KM algorithm shows a linear growth with
$K$ whereas the EM algorithm shows a quadratic growth, which is consistent with the
observation in Figure \ref{subfig:time_EM_kM}. Table
\ref{table:complexity} also shows that the EM and KM algorithms have a linear growth
with $M$, whereas the LoR and CW methods has quadratic growth. This is unlike what
we see on real data, as shown in Table \ref{table:mse}, where the run
times of the  EM and KM algorithms scale much worse with $M$ compared to the LoR and
CW methods. This could be due to bad constant factors for the EM and KM algorithms. IR and CR
has a linear growth with $M$, as is consistent with performance on
real data shown is Table \ref{table:mse}.


Based on the above discussions, we see that the LoR method is attractive from several viewpoints: it provides near best MSE and CE performance at reasonable computational load and is also less sensitive to the choice of the input parameter than other methods. To understand it better, in the next section, we consider simulations and some mathematical analysis of the LoR method. 

\subsection{Evaluation on Simulated Data}
\label{sec:simulated}
Recall the mathematical model for the data with Gaussian noise as introduced towards the end of Section \ref{sec:model}. We use $M=100$ experiments, $d=6$ feature vector dimension, $K_0=5$ number of clusters, and $N=70$ trials per experiment. We average the MSE over several realizations of the true regression vectors and the data.

In Figure \ref{subfig:mse_gain}, we see how IR, CR, the LoR method and the EM algorithm perform
as noise variance changes.  When the noise level is low, 
the LoR method can find the right neighbors and hence collaboration helps. The LoR method
shows better MSE than IR and CR, and for small noise levels, as expected its MSE is less than IR by a factor equal to the cluster size. As the noise level increases, it
becomes harder to find the right neighbors, and the optimal LoR method
picks all the neighbors to perform the regression and performs as good as CR. This phenomenon is
more clear in Figure
\ref{subfig:optimal_T}, which shows that when the noise level is low, the
optimal value of $T$ (the neighborhood size) is close to the actual
cluster size, and as the noise level increases the
optimal $T$ is roughly same as the total number of experiments. In other words, for high noise levels, the best strategy is to filter out noise by pooling data from all the experiments. In between the two extremes of high and low noise, the LoR method provides a graceful transition by balancing the need to average out noise and the need to better estimate the regression vector.
\begin{figure*}
  \centering
  \subfigure[MSE plot]{
\includegraphics[width=3.0in]{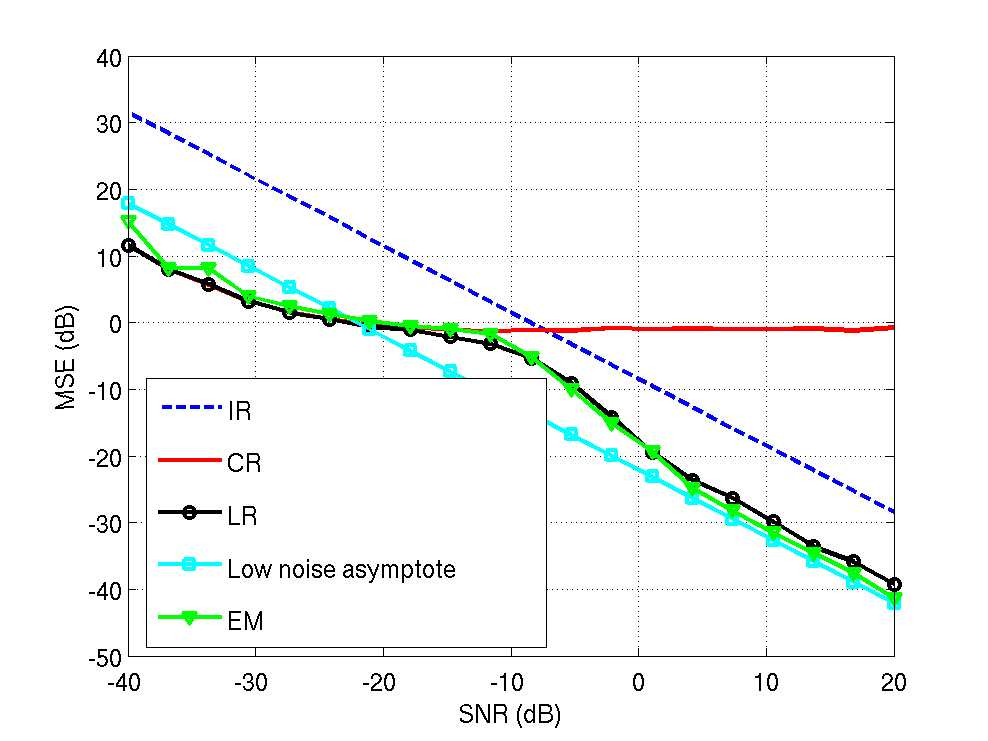}
\label{subfig:mse_gain}
}
\subfigure[Optimal neighborhood size]{
\includegraphics[width=3.0in]{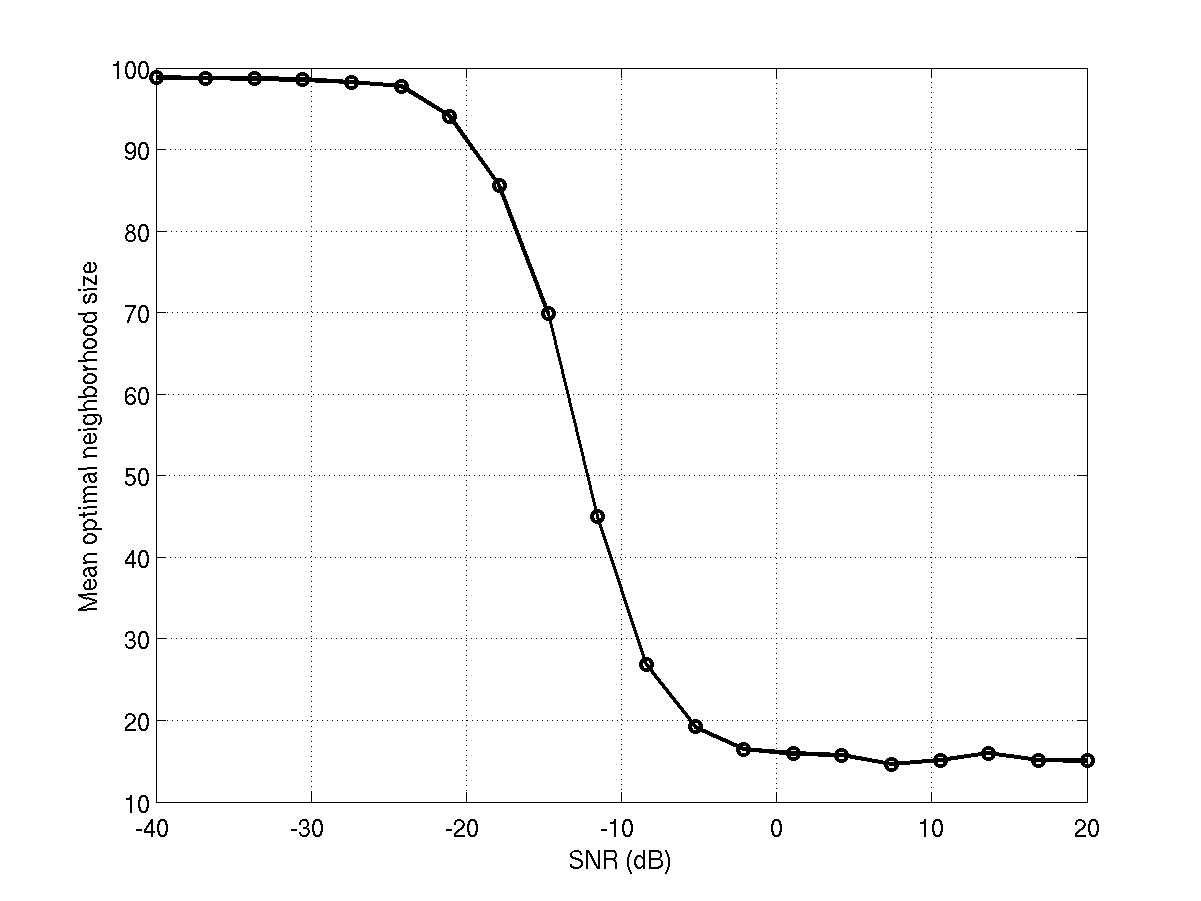}
\label{subfig:optimal_T}
}
  \caption{Performance comparison as the noise variance changes (for
    $M=100, N=70, K_0=5, d=6$)}
  \label{fig:performance_with_noise}
\end{figure*}

We also plot the performance of the EM algorithm as the noise level changes. Since the EM iterations can converge to a local maxima, we estimate the outage probability (probability that the iterations converge to a wrong optimum), and compute its MSE only over the trials that converge to something close to the true regression vectors. Figure \ref{subfig:mse_gain} shows that the EM algorithm  and the  LoR method has a very similar performance for all noise levels, while we observe an outage probability of 9.7\%, averaged over all noise levels.

\noindent
{\bf High SNR asymptote:}We assume that in the high SNR regime, the optimal LoR method picks
all the right neighbors to perform the local regression, and in this case, the estimate corresponds to a maximum likelihood (ML) estimate. Hence to find the high SNR asymptote, we can use standard results about ML estimate. If there are
$T$ neighbors, each with $N$ data points, then we have a local
regression with $TN$ data points for each experiment. Let $\hat{\mathbf
  h}_m$ be the estimated regression vector for $m$th experiment. We have the
following result.
\begin{proposition}
  \label{thm:high_snr}
For each $m=1,2,..., M$,
  \[ \sqrt{TN} (\hat{\mathbf h}_m -\mathbf h_m) \rightarrow \mathcal
  N(0,\sigma^2 \mathbf I),\]
where $\mathbf I$ is the $d\times d$ identity matrix, $\mathcal N(\mu, \Sigma)$ denotes a Gaussian distribution with mean $\mu$ and covariance matrix $\Sigma$, and,  the
convergence is in distribution.
\end{proposition}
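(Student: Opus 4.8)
The plan is to recognize the LoR estimate $\hat{\mathbf h}_m$, under the stated high-SNR hypothesis that the $T$ selected neighbors are exactly the experiments in the true cluster of $m$, as the maximum-likelihood (equivalently, least-squares) estimate of the location parameter $\mathbf h_m$ in a Gaussian linear model with $TN$ i.i.d.\ observations, and then to invoke the classical asymptotic normality of ML estimates \cite{Poor1}. Since that theorem gives the inverse Fisher information as the limiting covariance, the only real computation is to verify that this inverse equals $\sigma^2 \mathbf I$; the regime is the total sample size $TN \to \infty$ (e.g.\ $N \to \infty$ with the neighborhood size $T$ fixed at the true cluster size).

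First I would stack the data of the $T$ cluster mates into a single $TN \times d$ design matrix $\mathbf X$ and response vector $\mathbf y$. Because these experiments all share the common regression vector $\mathbf h_m$, the model becomes $\mathbf y = \mathbf X \mathbf h_m + \mathbf w$ with $\mathbf w \sim \mathcal N(\mathbf 0, \sigma^2 \mathbf I_{TN})$, and the rows of $\mathbf X$ are i.i.d.\ $\mathcal N(\mathbf 0, \mathbf I_d)$, independent of $\mathbf w$. Thus the pairs $(\mathbf x_i, y_i)$ are i.i.d., and the LoR formula $\hat{\mathbf h}_m = (\mathbf X^T \mathbf X)^{-1}\mathbf X^T \mathbf y$ is exactly the ML estimate for this i.i.d.\ sample. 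Next I would compute the per-sample Fisher information: from the single-observation log-likelihood $\ell(\mathbf h) = -\tfrac{1}{2}\log(2\pi\sigma^2) - \frac{(y - \mathbf h^T \mathbf x)^2}{2\sigma^2}$ the score is $\nabla_{\mathbf h}\ell = \frac{w}{\sigma^2}\mathbf x$, so $\mathbf J = \mathbb{E}[\nabla\ell\,(\nabla\ell)^T] = \frac{1}{\sigma^4}\,\mathbb{E}[w^2]\,\mathbb{E}[\mathbf x \mathbf x^T] = \frac{1}{\sigma^2}\mathbf I$, using independence of $\mathbf x$ and $w$ and the identity covariance of $\mathbf x$. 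Applying the ML theorem of \cite{Poor1} with $TN$ samples then yields $\sqrt{TN}(\hat{\mathbf h}_m - \mathbf h_m) \to \mathcal N(\mathbf 0, \mathbf J^{-1}) = \mathcal N(\mathbf 0, \sigma^2 \mathbf I)$.

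As a self-contained alternative I would use the explicit identity $\sqrt{TN}(\hat{\mathbf h}_m - \mathbf h_m) = (\mathbf X^T \mathbf X / TN)^{-1}(\mathbf X^T \mathbf w / \sqrt{TN})$. The strong law of large numbers gives $\mathbf X^T \mathbf X / TN \to \mathbf I$; the multivariate central limit theorem applied to the i.i.d.\ zero-mean vectors $\mathbf x_i w_i$, whose covariance is $\mathbb{E}[w^2]\,\mathbb{E}[\mathbf x \mathbf x^T] = \sigma^2 \mathbf I$, gives $\mathbf X^T \mathbf w / \sqrt{TN} \to \mathcal N(\mathbf 0, \sigma^2 \mathbf I)$; and Slutsky's theorem combines them to the claimed limit. (Exploiting exact Gaussianity, one may even note that conditional on $\mathbf X$ the centered estimate is already exactly $\mathcal N(\mathbf 0, \sigma^2(\mathbf X^T \mathbf X)^{-1})$, whose rescaled covariance $\sigma^2(\mathbf X^T \mathbf X/TN)^{-1}$ converges to $\sigma^2\mathbf I$.)

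I do not expect a genuine analytical obstacle: once the high-SNR modeling assumption is granted, the statement is a routine application of large-sample theory. The single nontrivial ingredient is precisely that hypothesis — that as the noise vanishes the neighborhood $\mathcal T$ coincides with the true cluster — which is assumed rather than established, and is what lets us treat the pooled data as a single homogeneous i.i.d.\ sample with shared parameter $\mathbf h_m$. The main points to state carefully are therefore the regularity conditions of the cited theorem (i.i.d.\ sampling, and a finite nonsingular Fisher information), both of which hold here since the Gaussian design and noise possess all moments and $\mathbf J = \sigma^{-2}\mathbf I$ is nonsingular.
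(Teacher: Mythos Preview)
Your proposal is correct and follows essentially the same route as the paper: compute the per-observation Fisher information for the Gaussian linear model with i.i.d.\ standard normal covariates (obtaining $\sigma^{-2}\mathbf I$) and then invoke the asymptotic normality of the ML estimate from \cite{Poor1}. The self-contained LLN/CLT/Slutsky alternative you sketch is a nice bonus that the paper does not include, but your primary argument matches the paper's almost line for line.
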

\begin{proof}
Let
$Pr[y_{mn}]$ denote the likelihood of a
trial. Then the $(i,j)$th entry of the Fisher information matrix (see
\cite[Section IV.E.1]{Poor1}) is defined as 
\begin{align*}
(\mathbf I_F)_{i,j} &:= \mathbb E_{\mathbf h_m} \left[\left(\frac{\partial}{\partial h_{mi}} \log
Pr[y_{mn}]\right) \left(\frac{\partial}{\partial h_{mj}} \log
Pr[y_{mn}]\right) \right]\\
& = \frac{1}{\sigma^4}\mathbb E_{\mathbf h_m} \left[ (y_{mn}-\mathbf
  h_m^T \mathbf x_{mn})^2 \mathbf x_{mn}(i) \mathbf x_{mn}(j)\right]\\
& = \frac{1}{\sigma^4}\mathbb E_{\mathbf h_m} \left[ w_{mn}^2\right]\mathbb E_{\mathbf h_m}\left[\mathbf
  x_{mn}(i) \mathbf x_{mn}(j)\right]\\
& = \frac{1}{\sigma^2} \mathbf 1_{i=j},
\end{align*}
where the last equality follows since $w_{mn}\sim \mathbf
N(0,\sigma^2)$ and $\mathbf x_{mn}$ are chosen as i.i.d. unit normal.
Now standard asymptotic
normality of maximum likelihood estimators for vector parameter
estimation \cite[Section IV.E.1]{Poor1} tells us that
\begin{align*}
\sqrt(TN) (\hat{\mathbf h}_m - \mathbf h_m) \rightarrow \mathcal N(0,
\mathbf I_F^{-1}) = \mathcal N(0,\sigma^2 \mathbf I).
\end{align*}
\end{proof}
Proposition \ref{thm:high_snr} suggests that for a large enough $TN$,
$(\hat{\mathbf h}_m - \mathbf h_m)$ is almost normal with mean 0 and
variance equal to $\frac{\sigma^2}{TN}\mathbf I$.  Thus the mean MSE 
\[ \mathbf \mathbb E \left[\|\hat{\mathbf h}_m - \mathbf h_m\|^2
  \right] \approx \mathrm{trace}\left(\frac{\sigma^2}{TN}\mathbf I\right) = \frac{\sigma^2 d}{TN}.\]
In Figure \ref{subfig:mse_gain}, we compare this asymptotic MSE using $T=M/K_0$, with the
empirical performance on simulated data, and observe that the LoR method performs
almost as good as the asymptote even at SNRs up to 0 dB. 

\section{Conclusion}
\label{sec:conclusion}
We considered the CRUC framework, where experiments within a cluster show similar relationship between the predictor variables and the response variables. We introduced and studied various methods, and based on experiments on the Yahoo Learning-to-rank Challenge dataset and on simulated data, we observe that the local regression (LoR) method  is a good practical choice. It shows near optimal performance, scales reasonably well with data size, and is relatively insensitive to choice of algorithm parameter. We also analyze the LoR method  for an associated mathematical model that helps us to understand its prediction performance and optimal parameter choice. But we have only scratched the surface and we hope to present a deeper analysis of LoR in a future manuscript. A detailed study of the more general non-parametric CRUC case also appears to be a fruitful direction for further investigation.

\appendix

\section{Derivation of the EM algorithm}
\label{app:em}
Suppose $z_{mk}\in \{0,1\}$ is an indicator whether the $k$th regression vector $\mathbf g_k$ is
sampled for the $m$th experiment. Then  we have  $\sum_k z_{mk} = 1$.

A bit of notation before we begin: $\mathbf Y:=(\mathbf y_1,  \mathbf
y_2, ..., \mathbf y_M)^T$, and  $\mathbf G:=(\mathbf g_1, ..., \mathbf g_K)$. Similarly $\mathbf z_{m}:=(z_{m1},..., z_{mK})^T$, and $\mathbf Z:=(\mathbf z_1,..., \mathbf z_M)^T$.

Let $p_k$ be the probability of picking the $k$th regression
vector. Then $\sum_{k=1}^K p_k =1$, and suppose $\mathbf p:=(p_1,
...,p_K)^T$ represents the vector of probabilities. We can write the likelihood  as the following mixture.
\[ Pr[y_{mn}|\mathbf x_{mn}, \mathbf G] = \sum_{k=1}^K p_k\mathcal N(y_{mn}|\mathbf g_k^T \mathbf x_{mn}, \sigma^2), \]
and
\[ Pr[\mathbf Y|\mathbf X, \mathbf G] = \prod_m \sum_{k=1}^K p_k\prod_n \mathcal N(y_{mn}|\mathbf g_k^T \mathbf x_{mn}, \sigma^2).\]
The ML estimate finds an estimate that maximizes the above, i.e., 
\[ \mathbf G_{ML} = \arg \max_{\mathbf G,\sigma, \mathbf p} Pr[\mathbf Y|\mathbf X, \mathbf G] .\]
Because it is a mixture, it is not easy to maximize the above
directly. Instead we take an approach based on EM algorithms to
maximize the likelihood. In the EM based algorithm, we need to find
\cite[Section 9.3]{bishop1}, \cite{Borman2004a}
\[\arg\max_{\mathbf G, \sigma, \mathbf p} \mathbb E_{Pr[\mathbf Z|\mathbf Y,\mathbf X, \mathbf G]} \log Pr[\mathbf Y, \mathbf Z|\mathbf X, \mathbf G].\] 
In our case this can be computed.
We see that $Pr[z_{mk}=1] = p_k,$
since the $k$th regression vector is picked with probability
$p_k$, and thus
\[Pr [\mathbf z_m] = \prod_k p_k^{z_{mk}},  \]
and $Pr[y_{mn}|\mathbf x_{mn}, \mathbf G, z_{mk}=1] = \mathcal N(y_{mn}|\mathbf g_k^T \mathbf x_{mn}, \sigma^2). $
Thus we have
\[Pr[y_{mn}|\mathbf x_{mn}, \mathbf G, \mathbf z_m] = \prod_k \mathcal N(y_{mn}|\mathbf g_k^T \mathbf x_{mn}, \sigma^2)^{z_{mk}}. \]
This implies that 
\[Pr[\mathbf Y|\mathbf X, \mathbf G, \mathbf Z] = \prod_m \prod_k \left(\prod_n \mathcal N(y_{mn}|\mathbf g_k^T \mathbf x_{mn}, \sigma^2)\right)^{z_{mk}}, \]
and 
\begin{align}
  \label{eq:25}
 Pr[\mathbf Y , \mathbf Z|\mathbf X, \mathbf G] = \prod_m \prod_k \left( p_k\prod_n \mathcal N(y_{mn}|\mathbf g_k^T \mathbf x_{mn}, \sigma^2)\right)^{z_{mk}}.  
\end{align}
Suppose $\gamma_{mk}$ denotes the expected value of $z_{mk}$ w.r.t. the posterior distribution $Pr[\mathbf Z|\mathbf Y, \mathbf X, \mathbf G]$. Then we have 
\begin{align}
\nonumber   \mathbf \gamma_{mk} = \mathbb E_{Pr[\mathbf Z|\mathbf Y, \mathbf X, \mathbf G]} z_{mk}& = Pr[z_{mk}=1|\mathbf Y, \mathbf X, \mathbf G]\\
\nonumber & = \frac{Pr[\mathbf Y, z_{mk}=1| \mathbf X, \mathbf G]}{Pr[\mathbf Y| \mathbf X, \mathbf G]}\\
\nonumber & = \frac{p_k\prod_n \mathcal N(y_{mn}|\mathbf g_k^T \mathbf x_{mn}, \sigma^2)}{\sum_k p_k\prod_n \mathcal N(y_{mn}|\mathbf g_k^T \mathbf x_{mn}, \sigma^2)}.
\end{align}
Using the above and \eqref{eq:25}, we see that
\begin{align}
\nonumber S &:=  \mathbb E_{Pr[\mathbf Z|\mathbf Y, \mathbf X, \mathbf G]} \log Pr[\mathbf Y , \mathbf Z|\mathbf X, \mathbf G] \\
\nonumber  & =  \sum_m \sum_k \gamma_{mk} \left(\log p_k+ \sum_n \log \mathcal N(y_{mn}|\mathbf g_k^T \mathbf x_{mn}, \sigma^2)\right)\\
\nonumber & = \sum_m\sum_k \gamma_{mk} \log p_k - \\
\nonumber & \hspace{0.4in}\sum_{m,k,n} \gamma_{mk}\left(\log \sqrt{2\pi}\sigma + \frac{(y_{mn}-\mathbf g_k^T \mathbf x_{mn})^2}{2\sigma^2}\right).
\end{align}
To maximize w.r.t $\mathbf G$, we find that 
\begin{align}
\label{eq:h}
\nonumber & \frac{dS}{d\mathbf g_k}  = 0\\
\nonumber \Rightarrow   & \sum_m \sum_n  \gamma_{mk}(y_{mn}-\mathbf g_k^T \mathbf x_{mn}) \mathbf x_{mn} = 0\\
\nonumber \Rightarrow &  \underbrace{\sum_m \sum_n  \gamma_{mk} y_{mn} \mathbf x_{mn}}_{\mathbf b_k}  = \underbrace{\sum_m \sum_n  \gamma_{mk} \mathbf x_{mn} \mathbf x_{mn}^T}_{\mathbf A_k} \mathbf g_k\\
\Rightarrow &\mathbf g_k = \mathbf A_k^{-1} \mathbf b_k.
\end{align}
And to maximize w.r.t. $\sigma$, we obtain
\begin{align}
\label{eq:sigma}
  \nonumber &\frac{dS}{d\sigma} = 0 \\
\nonumber \Rightarrow &\sum_{m,n,k}  \gamma_{mk}\left(\frac{1}{\sigma} - \frac{(y_{mn}-\mathbf g_k^T \mathbf x_{mn})^2}{\sigma^3} \right)  = 0\\
\Rightarrow &\sigma^2 = \frac{\sum_{m,n,k} \gamma_{mk} (y_{mn}-\mathbf g_k^T \mathbf x_{mn})^2}{\sum_{m,n,k} \gamma_{mk}}.
\end{align}
To maximize w.r.t. to probability vector $\mathbf p$, we
have the following Lagrangian to be maximized.
\[L:= \sum_{m,k}\gamma_{mk} \log p_k + \lambda\left(\sum_k
    p_k -1\right).\]
By setting its derivative w.r.t. $p_k$ to zero, we obtain
\begin{align*}
\sum_{m} \gamma_{mk}/p_k +\lambda = 0, k=1,..., K.
\end{align*}
Using the above and the constraint that $\sum_k p_k=1$, we obtain
\begin{align}
\label{eq:p}
p_k = \frac{\sum_m \gamma_{mk}}{\sum_{m,k} \gamma_{mk}}, k=1,..., K.
\end{align}

Equation (\ref{eq:h}),  (\ref{eq:sigma}) and (\ref{eq:p}) completes
the derivation of the EM iterations.

\section{MSE as a Quadratic Constraint in SVT Algorithm}
\label{app:svt}
The optimization problem to be solved is to minimize rank of the  matrix with columns as the regression vectors, under an MSE constraint, i.e.,
\[\textrm{minimize } \mathtt{rank}(\mathbf G), \textrm{such that } MSE(\mathbf G) \le \epsilon.\]
Let  $\mathbf b:=(y_{11},...,y_{MN})^T$ denote the $MN\times 1$ vector
of all the relevance scores. And suppose $\mathbf A$ be the block
diagonal matrix whose $i$th  diagonal block ($i=1,..., M$)  is the matrix $\mathbf
X_i:=(\mathbf x_{i1}, ..., \mathbf x_{iN})^T\in \mathbb R^{N\times
  d}$. Also let $\mathbf g:=\mathrm{vec}(\mathbf G)\in \mathbb
R^{dM\times 1}$ denote the vector
obtained by stacking columns of $\mathbf G$ one after another. Suppose
$\mathcal A(\mathbf G):= \mathbf X\mathbf g$. $\mathcal A(.)$ is a
linear map from $\mathbb R^{dM}$ to $\mathbb R^{MN}$. Then we
have
\[MSE(\mathbf G) = \|\mathbf b- \mathcal A(\mathbf G)\|, \]
where $\|.\|$ denotes the Frobenius norm of a matrix. Thus the rank
minimization problem can be written as 
\[\textrm{minimize } \mathtt{rank}(\mathbf G), \textrm{such that }
\|\mathbf b- \mathcal A(\mathbf G)\| \le \epsilon,\]
which has the exact same formulation as the rank minimization problem
considered in \cite[Section 3.3.2]{Candes_cai}.

\bibliographystyle{abbrv}

\bibliography{myBib}  
%

\balancecolumns

\end{document}